\documentclass[sigconf,nonacm]{acmart}
\usepackage{amsmath,amsthm}
\usepackage{booktabs}
\usepackage{algorithm}
\usepackage[noend]{algpseudocode}
\setcopyright{none}

\title[Progressive Risk Vesting]{Spawn Freely, Act Sparingly:
Progressive Risk Vesting for Recursive LLM-Agent Trees}
\hypersetup{pdfauthor={Molly Wang}}

\newtheorem{theorem}{Theorem}
\newtheorem{proposition}{Proposition}
\newtheorem{corollary}{Corollary}

\newcommand{\E}{\mathbb{E}}
\newcommand{\Prb}{\mathbb{P}}
\newcommand{\F}{\mathcal{F}}
\newcommand{\RA}{\mathcal{R}_{\!A}}
\newcommand{\prv}{\textsc{PRV}}

\author{Molly Wang}
\affiliation{%
  \institution{Imperial Business School}
  \city{London}
  \country{United Kingdom}}
\email{jw923@ic.ac.uk}
\begin{document}

\begin{abstract}
Recursive LLM agents can broaden their search by spawning specialists.  Some branches later
request tools that send data or deploy code.  When should a branch receive authority to act?  We
distinguish \emph{sandbox spawning}, in which external controls prevent the specified harm, from
\emph{capability activation}, in which a selected branch crosses an irreversible-action boundary.
Progressive Risk Vesting (\prv{}) holds a trajectory-level risk budget in escrow and debits it as
branches are activated.  We prove an anytime harm bound for adaptively generated trees.  Branch
outcomes may be dependent, but each local certificate needs to remain valid conditional on the
full pre-activation history, including the information used to select the request.  When
activation gates, branch charges, and compute constraints are held fixed, delayed vesting
preserves every policy available under irrevocable spawn charging.  Marginal risk estimates can
still fail after branch selection.  In a stylized branching model, trajectory harm changes as the
authority reproduction number $\RA$ crosses one.  As local risk $p$ approaches zero, trajectory
harm is proportional to $p$ below criticality, proportional to $\sqrt p$ at criticality, and
retains a positive floor above it.  A finite-type occupancy model yields risk and compute shadow
prices.  For nested fanout modes with decreasing marginal value per unit risk, these prices
produce a threshold rule.  Branching calculations and a split-sample experiment illustrate the
results.  These synthetic studies do not estimate safety in deployed agents.  The analysis
suggests a design rule: search broadly in the sandbox and grant recursive authority sparingly,
with an explicit risk charge.
\end{abstract}

\begin{CCSXML}
<ccs2012>
 <concept>
  <concept_id>10010147.10010178.10010179</concept_id>
  <concept_desc>Computing methodologies~Multi-agent systems</concept_desc>
  <concept_significance>500</concept_significance>
 </concept>
 <concept>
  <concept_id>10002978.10003022.10003023</concept_id>
  <concept_desc>Security and privacy~Software security engineering</concept_desc>
  <concept_significance>300</concept_significance>
 </concept>
</ccs2012>
\end{CCSXML}
\ccsdesc[500]{Computing methodologies~Multi-agent systems}
\ccsdesc[300]{Security and privacy~Software security engineering}

\keywords{recursive LLM agents, risk control, branching processes, capability activation}

\maketitle

\section{Introduction}

\subsection{The decision hidden inside recursive spawning}

Consider a coding agent asked to repair a production service.  It sends children to diagnose the
issue and draft a patch; those children may spawn their own specialists.  Most branches work in
a disposable repository and return candidate artifacts.  At some point, one branch requests a
capability with external effects, perhaps access to a secret or permission to deploy.  The system
has moved from generating possibilities to changing the world.

Recent systems use recursive delegation to create reasoning threads and run recursive workloads
in parallel \cite{schroeder2025thread,song2026webswarm}.  More branches may improve coverage.  They
also consume compute and time and increase the number of branches that may receive
authority.  Each authorized descendant adds another possible path to irreversible harm, whether
through prompt injection or tool failure.  NIST's agent-hijacking evaluations include data
exfiltration and malicious code execution \cite{nist2025hijacking}.  This makes the authorization
chain a separate safety concern \cite{south2025delegation}.

Two possible controls are hard fanout limits and a risk charge for every new child.  Either can
suppress useful exploration.  If external controls prevent a reasoning branch from causing the
specified harm, that branch consumes compute without increasing exposure to the same catastrophe.
Charging it as if it held production credentials can lose the option value of generating and
comparing candidates.  If every child receives the parent's authority, broad search also expands
the attack surface.

We study the difference between \emph{delegating work} and \emph{delegating the right to act}.
A child starts with sandboxed capabilities.  The shared risk budget is charged when a governor
approves a specified irreversible action or a capability that could enable one.  We therefore
distinguish two trees: a sandbox tree that consumes compute and an authority tree that consumes
the risk budget.  Our central question is:

\begin{quote}
\emph{How can broad recursive search coexist with controlled irreversible authority?}
\end{quote}

\subsection{Contributions}

We propose Progressive Risk Vesting (\prv{}).  Our main contributions are:

\begin{enumerate}
  \item \textbf{Risk escrow.}  Branches may explore inside an enforced sandbox.  The governor
  debits the shared risk budget when a selected branch crosses an irreversible boundary.  A
  branch cancelled before external exposure consumes compute but no risk allowance.
  \item \textbf{Anytime guarantee on a random tree.}  If each local certificate remains valid
  conditional on the full pre-activation history, the root escrow bounds the probability of any
  policy-defined catastrophic activation.  The topology and stopping rule may adapt to history,
  and branch outcomes need not be independent.
  \item \textbf{Selection and option value.}  With fixed gates, charges, and compute limits,
  delayed vesting keeps every spawn-charging policy feasible and can leave room for better
  choices.  A counterexample
  also shows that marginal calibration can fail when the parent selectively escalates suspicious
  or difficult branches.
  \item \textbf{Authority phase transition.}  In the unbounded homogeneous model, the authority
  reproduction rate determines whether trajectory harm is linear in local risk, proportional to
  its square root, or bounded below by a nonzero floor.
  \item \textbf{Optimization and numerical evidence.}  A multitype occupancy program gives risk
  and compute shadow prices.  For nested fanout modes with decreasing marginal value per unit
  risk, these prices imply a base-stock fanout rule.  Two reproducible synthetic studies illustrate
  the phase transition and the option value of delayed vesting.
\end{enumerate}

The guarantees depend on their stated conditions and should not be read as universal safety
claims.  The surrounding system needs to enforce the ``sandbox'' and define the catastrophe to
cover the deployment's material harms.  Activation certificates also need to remain valid after
adaptive selection.  The claims apply only when these assumptions hold in deployment.

\section{Related Work}

\paragraph{Recursive systems.}
Several recent systems create agents or reasoning branches at runtime.  THREAD creates reasoning
threads \cite{schroeder2025thread}, while AgentSpawn and AOrchestra construct task-specific agents at
runtime \cite{costa2026agentspawn,ruan2026aorchestra}.  Recursive Agent Harnesses and WebSwarm
expand recursive workloads \cite{lumer2026rah,song2026webswarm}.  MaAS studies the related
problem of allocating inference resources across candidate architectures
\cite{zhang2025maas}.  Recent work also suggests that adding homogeneous agents has diminishing
returns, while diverse information channels may remain useful \cite{yang2026diversity}.  These
studies examine the benefits and compute costs of recursive execution.  We focus on the point
where a branch receives irreversible authority and ask how harm probability
should be budgeted across the full trajectory.

\paragraph{Delegation and compositional risk.}
Existing safety mechanisms often govern how permissions and resources pass down a delegation
chain.  Authenticated delegation studies authorization and accountability
\cite{south2025delegation}.  Safe Bilevel Delegation transfers authority under a probabilistic
constraint, leaving open how that constraint composes across a recursively generated episode
\cite{sun2026sbd}.  Other work narrows delegated scopes
\cite{muruaga2026bounded,ibrahim2026overlay}, and LACUNA rejects invalid whole actions before
they affect the environment \cite{zhao2026lacuna}.  Agent Contracts and related resource-budget
methods conserve resources as agents delegate work or change topology
\cite{ye2026contracts,talluri2026rgao,khan2026token}.  Our ledger applies the same conservation
idea to a shared harm-probability budget.  The extra statistical condition is that the charged
certificate remain valid after the parent has selected the branch.

Risk-Sensitive Agent Compositions optimizes VaR and CVaR over paths in a known agent DAG
\cite{shabadi2026risk}.  Pipeline-aware and post-hoc methods certify fixed modular trajectories
\cite{kotte2026pasc,li2026trajectory}.  Here, the topology unfolds online and the parent selects a
branch before it acts.  We use a conditional certificate at each activation.  When the policy is
fixed in advance, whole-policy certification remains an alternative.

\paragraph{Propagation and stochastic processes.}
Error-cascade models trace an injected mistake through message dependencies
\cite{xie2026spark}.  Recursive spawning creates a separate concern: each authority-bearing
node adds another opportunity for a local irreversible failure.  Classical branching-process
theory describes extinction and total progeny \cite{athreya1972branching}.  We use these results
to separate the reproduction of sandbox compute from the reproduction of authority.  Standard
constrained-control occupancy measures \cite{altman1999cmdp} then provide a way to allocate
compute and risk across authority-bearing branches.

\section{Model}

\subsection{Sandbox tree, authority tree, and activation boundary}

We track sandboxed exploration, the order in which branches receive real-world authority, and a
root risk escrow that descendants cannot copy.

A root agent grows a sandbox tree $\mathcal T^S$ in response to what its branches discover; the
model allows this tree to be countable.  A sandboxed node may reason, read immutable task-local
data, and create further sandboxed nodes.  The catastrophe predicate describes the
application-level harm that the system aims to prevent.  In this paper, a node counts as
sandboxed when external controls make that outcome unreachable with the node's available
capabilities.  A repository copy with no network access is one example.  If the policy treats
reading sensitive data itself as a catastrophe, that read is an activation rather than a sandbox
operation.

\emph{Activation} marks the move from sandboxed reasoning to an action or capability that can
realize the specified catastrophe predicate.  A branch reaches this boundary when it proposes a
specified action or requests a capability scope that could enable one.  A certificate for a
nonrenewable capability lease needs to cover every use and downstream effect during the lease;
otherwise, each use requires a new gate decision.  An approved request enters the activation
sequence when its action or capability is exposed, and we number these requests chronologically by
$i=1,2,\ldots$.  Rejected requests do not enter the sequence.  The governor assigns every approved
request an atomic position in the admission order, although approved branches may later run in
parallel.  Let $H_i$ be the event that activation $i$ causes a policy-defined catastrophe.  We pad
a finite activation sequence with null events and zero allowances.

The history $\F_{i-1}$ contains all observations available to the governor before it approves
activation $i$.  This includes the generated tree, the requested action and scope, branch and test
evidence, and earlier grants and outcomes.  It also includes all information used to select the
request.  The governor
chooses an $\F_{i-1}$-measurable allowance $a_i\geq0$ and requires

\begin{equation}
 \Prb(H_i\mid \F_{i-1})\leq a_i .
 \label{eq:conditional}
\end{equation}

Equation~\eqref{eq:conditional} says what the certificate needs to guarantee: after accounting
for the information used to select the request, its harm probability is no greater than $a_i$.
A score from a learned monitor does not by itself satisfy this condition.  The root begins with
risk escrow $\delta\in(0,1)$, and the governor keeps $\sum_i a_i\leq\delta$ almost surely.  It approves the
request and debits the escrow in one transaction before exposure.  A rejected request, or one
cancelled before exposure, incurs no debit.  A retry after exposure is a new activation and needs
a new certificate.

\subsection{Decision objective}

A policy $\pi$ decides how many sandbox candidates to generate, which branches to activate, what
scope to grant, and when to ask a human or stop.  Let $W(\pi)$ denote expected useful task value
and $C(\pi)$ denote expected compute and latency cost in the same utility units.  The policy trades
task value against those costs while keeping the chance of any catastrophe in
the episode below $\delta$:

\[
 \max_{\pi}\; W(\pi)-C(\pi)
 \quad\text{s.t.}\quad
 \Prb_{\pi}\!\left(\bigcup_{i\geq1}H_i\right)\leq\delta .
\]

This trajectory-level chance constraint is hard to check online.  The escrow provides a
conservative condition that the governor can enforce as the tree grows.  Sandbox computation
still uses resources, which are handled by separate compute and concurrency budgets.

\section{Analysis}

\subsection{Anytime control under adaptive spawning}

The accounting idea is to assign each selected request a conditional probability allowance and
keep total spending within $\delta$.  Adaptive spawning may change which requests reach the gate;
the ledger still bounds their combined allowance.

\begin{theorem}[Selection-valid tree guarantee]
For any finite or countable sandbox tree generated by an adaptive policy, suppose each approved
activation satisfies Eq.~\eqref{eq:conditional} and
$\sum_i a_i\leq\delta$ almost surely.  Then
\[
 \Prb\!\left(\bigcup_{i\geq1}H_i\right)\leq\delta .
\]
The conclusion does not require independent branches or a predetermined depth limit.
\end{theorem}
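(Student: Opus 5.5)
The plan is a union bound combined with the tower property, applied term by term along the chronological activation sequence; the tree topology only matters through the filtration. First I would check the measurability setup. The activation index $i$ is defined in admission order, and any finite sequence is padded with null events and zero allowances. So regardless of how the sandbox tree grows, and whether it is finite or countable, we have a single sequence $(H_i,a_i)_{i\ge1}$ adapted to the filtration $(\F_{i-1})$. Each $a_i$ is $\F_{i-1}$-measurable and nonnegative. Both the adaptive topology and the stopping rule are absorbed into this history, because the stopping rule only determines from which index on the padding applies, and the padded events are null with $a_i=0$.

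Next, countable subadditivity gives
\[
\Prb\Bigl(\bigcup_{i\ge1}H_i\Bigr)\le\sum_{i\ge1}\Prb(H_i)=\sum_{i\ge1}\E\bigl[\Prb(H_i\mid\F_{i-1})\bigr]\le\sum_{i\ge1}\E[a_i],
\]
where the middle equality is the tower property and the last step uses Eq.~\eqref{eq:conditional} inside the expectation. Since every $a_i\ge0$, monotone convergence (Tonelli) lets me exchange sum and expectation: $\sum_i\E[a_i]=\E\bigl[\sum_i a_i\bigr]\le\delta$, using the almost-sure escrow constraint. At no point is independence between branches used. Dependence is handled entirely by conditioning each term on its own history, and the union bound holds for arbitrary dependence.

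The main point of care is the well-definedness of each step, not the inequality itself. When an index $i$ is never reached, $H_i$ must be a null event and $a_i$ must be $0$, and this choice must itself be $\F_{i-1}$-measurable. I would treat this by defining $a_i:=0$ and $H_i:=\emptyset$ on the $\F_{i-1}$-measurable event that fewer than $i$ activations have occurred. Then Eq.~\eqref{eq:conditional} holds trivially there. The main obstacle is really an interpretive one: the conditioning must be on the full pre-activation history, including the selection information. If one conditioned only on a coarser $\sigma$-field, the tower step would still hold, but the hypothesis would no longer give the bound, because a selected branch's conditional harm probability can exceed its marginal one. I would remark on this to motivate the later counterexample. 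Beyond that, the argument needs only nonnegativity of the allowances to justify interchanging the countable sum and expectation; no depth limit is required.
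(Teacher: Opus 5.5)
Your proof is correct and is essentially the paper's: Boole's inequality, the tower property, and Eq.~\eqref{eq:conditional}. You pass to the countable union via countable subadditivity and Tonelli, whereas the paper bounds each finite union $\bigcup_{i\le n}H_i$ and then lets $n\to\infty$ by continuity from below, which is the same monotone-limit step.

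One correction to your closing aside: the argument only uses $\Prb(H_i)\leq\E[a_i]$, so a bound $\Prb(H_i\mid\mathcal{G}_i)\leq a_i$ with respect to a coarser $\sigma$-field would still give the conclusion. What fails in the selection counterexample is the hypothesis itself, because the marginally calibrated value $r$ does not bound the selected activation's harm probability, which is $2r$.
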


\begin{proof}[Proof sketch]
For every finite $n$, Boole's inequality, the tower property, and conditional validity give
\[
\Prb\!\left(\bigcup_{i=1}^{n}H_i\right)
\leq\sum_{i=1}^{n}\E[\Prb(H_i\mid\F_{i-1})]
\leq\E\!\left[\sum_{i=1}^{n}a_i\right]
\leq\delta.
\]
Letting $n\to\infty$ and applying monotone convergence gives the stated bound for the full
countable sequence.
\end{proof}

The escrow uses predictable probability spending: each allowance is chosen from information
available before the corresponding activation.  Here, the allowance is chosen just before
irreversible authority is granted.  This is related to online alpha-spending
\cite{weinstein2020online}.  The bound concerns the sequence of approved activations; the i.i.d.
branching model below provides structural insight.

The root balance need not stay centralized.  It may be passed down the tree as long as delegation
cannot copy it.  Suppose a parent $v$ holds escrow $B_v$.  It may spend $s_v$ on its own
activations and transfer balances $B_u$ to authority-bearing children when
$s_v+\sum_u B_u\leq B_v$.  On any finite ancestor-closed subtree, repeated use of this inequality
bounds its debits plus the nonnegative outgoing balances by $\delta$.  Taking an increasing union
over a countable tree shows that total debits remain at most $\delta$.  Sandboxed children receive
compute tokens but no catastrophe escrow as long as the sandbox assumption holds.

\subsection{Why risk should vest at activation}

Spawn charging uses risk allowance before the parent knows whether a branch will be useful or
activated.  To compare the two accounting rules on equal terms, assign each spawned branch a
nonnegative reservation $b_v$ before observing its sandbox evidence.  \emph{Spawn charging}
irrevocably reserves $b_v$ when the branch is created and does not reclaim it if the branch is
discarded.  Progressive vesting applies the same charge when
the gate activates $v$.  If the eventual certificate depends on later evidence, let $b_v$
upper-bound any charge that the gate could assign.

\begin{proposition}[Option-value dominance]
Fix the same sandbox, activation gate, compute constraints, and branch charges $b_v$.
Every policy feasible under spawn charging is feasible under progressive vesting with identical
activated actions and harm distribution.  The optimal expected utility under
progressive vesting is weakly higher.
\end{proposition}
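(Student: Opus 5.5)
The plan is a simulation argument: take any policy $\pi$ that is feasible under spawn charging and run the same policy unchanged under progressive vesting, then compare ledgers path by path. First I will fix the objects being compared. A policy is a rule that maps histories to spawn, activation, scope, and stop decisions. Feasibility under a given accounting rule means three things: the compute constraints hold, every activation passes the gate with its certificate, and the ledger total stays at most $\delta$ almost surely. Because the sandbox, the gate, the compute constraints, and the charges $b_v$ are the same in both regimes, the decision rule $\pi$ is a valid rule in both. The only difference between the regimes is \emph{when} the ledger is debited and for \emph{which} nodes.

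The central step is a pathwise ledger comparison. On every realization, spawn charging debits $\sum_{v\in\mathcal T^S} b_v$, summed over all spawned branches. Progressive vesting debits $\sum_{v\in A} c_v$, where $A\subseteq\mathcal T^S$ is the set of activated branches and $c_v\le b_v$ is the charge the gate actually assigns. Since $b_v\ge 0$, we get
\[
\textstyle\sum_{v\in A} c_v\le \sum_{v\in A} b_v\le \sum_{v\in\mathcal T^S} b_v\le\delta
\]
almost surely. The same inequality holds at every intermediate time, because vesting only moves debits later and never earlier. Any running-balance check that $\pi$ passes under spawn charging is therefore passed under vesting, so no request that was approved before is refused now.

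Two small points need care. First, the gate's approval must not depend on the remaining escrow beyond this feasibility check, so that it makes the same decisions on the same history. This holds because the gate is fixed and vesting's balance is always at least as large. Second, the charge must still satisfy Eq.~\eqref{eq:conditional}. Here I take $c_v$ to be the gate's certificate, which by assumption is at most $b_v$. An alternative is to charge $b_v$ itself, which is conservative and still valid. With these in place, I will build the coupling: the same probability space, the same sandbox randomness, and the same decision rule. Under this coupling the two regimes produce identical histories $\F_{i-1}$, identical activated actions, and identical events $H_i$. Hence the harm distributions coincide, and so do $W(\pi)$ and $C(\pi)$, since compute is unaffected by risk accounting.

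Weak optimality then follows from inclusion of feasible sets: a supremum over a larger set is at least as large. The step I expect to be the main obstacle is making ``same policy'' rigorous when the policy's information could include the ledger balance itself, since the balance differs between the two regimes. My plan is to restrict to policies whose decisions are measurable with respect to the task history, with the ledger enforced only as a constraint. Alternatively, I will note that under spawn charging the ledger balance is a deterministic function of that history and the known charges $b_v$. The vesting policy can then recompute this "virtual spawn-charged balance" and feed it to $\pi$, which preserves the coupling.
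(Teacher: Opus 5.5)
Your proposal is correct and follows essentially the same route as the paper: run the same policy under both rules, use $A\subseteq S$ on every path with nonnegative charges to get $\sum_{v\in A}b_v\le\sum_{v\in S}b_v\le\delta$, and conclude weak dominance from the inclusion of feasible sets. Your additional care about running-balance checks at intermediate times, the coupling that yields identical harm events, and policies that read the ledger balance fills in details the paper leaves implicit rather than changing the argument.
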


Activated branches are a subset of spawned branches; some spawned branches may never be
activated.  Since $A\subseteq S$ on every path,
$\sum_{v\in A}b_v\leq\sum_{v\in S}b_v$.  The numerical study below gives one case with strict
improvement.  Spawned branches still consume compute; the proposition separates that cost from
catastrophe-risk allowance when external controls enforce the sandbox boundary.

Selection creates a statistical concern.  Suppose half of candidate branches have harm
probability zero and half have harm probability $2r$, with $r\leq1/2$.  A monitor that reports
the marginal risk $r$ is calibrated across candidates before filtering.  If the parent observes
a signal identifying the second group and activates those branches, the conditional risk among
activated branches becomes $2r$.  A recursive system can repeat this filtering at every level.
A marginal certificate---or one calibrated for a fixed pipeline and reused after endogenous
routing---may then fail after selection.  Equation~\eqref{eq:conditional} conditions on the
information used to choose the branch.

\subsection{The authority-reproduction phase transition}

The relevant reproduction rate is the expected number of children that inherit authority, rather
than the total number of branches created.  We capture this distinction with a marked branching process.  Let $m$ be
the mean number of sandbox candidates generated by each authority node, and let $s$ be the
probability that a candidate receives authority.  Each authority-bearing agent generates a
random number $N$ of sandbox candidates; write $\phi$ for the probability generating function of
$N$, whose mean is $m$.  Each candidate independently receives authority with probability $s$.
The count $K$ of authority-bearing children has generating function

\[
 \psi(z)=\phi(1-s+sz),\qquad \RA=\E[K]=ms.
\]

Only authority-bearing children continue the risky chain, so their mean count is
$\RA=ms$.  Independently of its offspring count and of all other nodes, each authority-bearing
node causes a local catastrophe with probability $p$.  Let $R(p)$ be the probability that at least one catastrophe occurs anywhere in
the unbounded authority tree, and let $h(p)=1-R(p)$.  These independence assumptions apply only to
this stylized analysis, not to the anytime theorem.

\begin{theorem}[Branching phase transition]
The no-harm probability is the largest fixed point in $[0,1]$ satisfying
\begin{equation}
 h=(1-p)\psi(h).
 \label{eq:fixedpoint}
\end{equation}
\begin{enumerate}
 \item If $\RA<1$ and the second factorial moment $\beta=\psi''(1)$ is finite, then
 $R(p)=p/(1-\RA)+O(p^2)$ as $p\downarrow0$.
 \item If $\RA=1$, the offspring law is nondegenerate, and
 $0<\beta=\psi''(1)<\infty$, then
 $R(p)\sim\sqrt{2p/\beta}$.
 \item If $\RA>1$ and $\xi<1$ is the extinction probability, then
 $\lim_{p\downarrow0}R(p)=1-\xi$.
\end{enumerate}
\end{theorem}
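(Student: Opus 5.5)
We first derive the fixed-point characterization by first-step analysis on truncated trees, then read off the three regimes from the behaviour of the fixed point as $p\downarrow0$.

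\textbf{Fixed-point characterization.} Let $h_n$ be the probability that no catastrophe occurs among authority nodes in generations $0,\ldots,n$ of the subtree rooted at an authority node. The root is harm-free with probability $1-p$, independently of its offspring count $K$. The $K$ child subtrees are i.i.d. and independent of $K$. Conditioning on the root's outcome and on $K$ therefore gives $h_0=1-p$ and $h_{n}=(1-p)\psi(h_{n-1})$. The map $g(z)=(1-p)\psi(z)$ is increasing on $[0,1]$, and $h_1\le h_0$, so $h_n$ decreases to $h=\lim h_n$; this limit is the no-harm probability by continuity of measure. Continuity of $\psi$ gives $h=g(h)$. To see that $h$ is the largest fixed point, take any fixed point $z^*\in[0,1]$. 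Then $z^*=g(z^*)\le g(1)=1-p=h_0$, and induction with monotonicity of $g$ gives $z^*\le h_n$ for all $n$. Hence $z^*\le h$.

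\textbf{Expansions.} Write $R=1-h$. Taylor expansion of $\psi$ at $1$ gives
\[
\psi(1-R)=1-\RA R+\tfrac{\beta}{2}R^2+o(R^2)
\]
when $\beta<\infty$. Substituting into Eq.~\eqref{eq:fixedpoint} yields
\[
R=p+(1-p)\bigl(\RA R-\tfrac{\beta}{2}R^2+o(R^2)\bigr).
\]
We also need $R(p)\to0$ in cases 1 and 2. This holds because the total authority progeny is a.s. finite when $\RA\le1$ and the offspring law is nondegenerate or subcritical; then $R(p)\le\E[1-(1-p)^{T}]\to0$ by dominated convergence.

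\emph{Case 1} ($\RA<1$). The relation becomes $R(1-\RA)=p+O(pR)+O(R^2)$. This first shows $R=O(p)$, and then $R=p/(1-\RA)+O(p^2)$. A cleaner route to $O(p)$ is the bound $R\le p\,\E[T]=p/(1-\RA)$.

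\emph{Case 2} ($\RA=1$). The linear terms cancel up to $pR$, leaving
\[
\tfrac{\beta}{2}R^2(1+o(1))=p-pR=p(1+o(1)).
\]
Hence $R\sim\sqrt{2p/\beta}$.

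\emph{Case 3} ($\RA>1$). The fixed points of $g_p$ are continuous in $p$. As $p\downarrow0$, the largest fixed point of $g_p$ in $[0,1]$ stays below $1$ and converges to a fixed point of $\psi$ in $[0,1)$, which must be $\xi$. Here the largest fixed point of $g_p$ satisfies $g_p(z)<\psi(z)$, so it lies at most at $\xi$. Hence $h(p)\le\xi$. Conversely, for $z<\xi$ we have $\psi(z)-z>0$ bounded away from zero on compacts, so $g_p(z)>z$ for small $p$. Thus $h(p)\to\xi$ and $R(p)\to1-\xi$. One can also argue directly: on survival, infinitely many authority nodes exist, so harm occurs a.s. This gives $R(p)\ge1-\xi$ for each $p>0$. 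On extinction, the progeny is finite, so the harm probability tends to $0$.

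\textbf{Main obstacle.} The delicate step is controlling the remainder in case 2 with only $\beta<\infty$, and, before that, establishing $R(p)\to0$ so that the Taylor expansion is legitimate. We will handle the remainder with the integral form
\[
\psi(1-R)-1+R=\int_0^R(R-t)\,\psi''(1-t)\,dt,
\]
where $\psi''(1-t)\uparrow\beta$ by monotone convergence, giving an $o(R^2)$ error. We will handle $R(p)\to0$ by the finite-progeny argument above, which needs nondegeneracy at criticality.
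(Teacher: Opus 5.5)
Your proposal is correct and follows the paper's route: the depth-truncated recursion started from the top gives the largest fixed point, a second-order expansion of $\psi$ at $1$ handles the subcritical and critical cases, and the extinction/survival dichotomy handles the supercritical case (you add the squeeze $z<h(p)\le\xi$ for each $z<\xi$ and small $p$). Your proof that $R(p)\to0$ from almost surely finite total progeny, together with the integral form of the remainder, makes explicit a step the paper leaves implicit when it invokes the implicit-function theorem and compares leading terms: that the largest fixed point is the solution branch through $(p,x)=(0,0)$.
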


\begin{proof}[Proof sketch]
No harm through depth $L+1$ requires the root to be harmless and every child subtree to be
harmless through depth $L$, which gives the recursion in Eq.~\eqref{eq:fixedpoint}.  Iterating this
recursion from $1$ produces a decreasing sequence whose limit is the largest fixed point.  Writing $x=1-h$, the
finite-$\beta$ expansion
$\psi(1-x)=1-\RA x+(\beta/2)x^2+o(x^2)$ applies.  Substituting this expansion into the fixed-point
equation and comparing leading terms gives the first two asymptotic regimes.  For the third
regime, an extinct tree is finite and becomes harmless with probability tending to one as
$p\downarrow0$.  On nonextinction, infinitely many independent local harm events make eventual harm
almost sure.
\end{proof}

The threshold has a natural branching interpretation.  When $\RA<1$, authority lineages tend to
die out; at $\RA=1$, they are balanced.  When $\RA>1$, a lineage may survive indefinitely, creating
a risk floor that smaller local failure probabilities cannot remove.

\begin{corollary}[Broad search, subcritical authority]
The mean candidate count $m$ may exceed one while the authority tree remains subcritical if
$ms<1$.  When $\RA<1$, the expected total number of authority nodes, including the root, is
$1/(1-\RA)$.  In the untruncated benchmark, suppose each authority node has a constant valid
charge $0<r\leq\delta$.  The ex ante union bound is then at most $r/(1-\RA)$; requiring it to be
no more than $\delta$ is equivalent to $\RA\leq1-r/\delta$.
\end{corollary}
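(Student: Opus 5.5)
The plan is to prove the three claims of the corollary in order, each from a standard branching-process identity together with the Selection-valid tree guarantee.

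\emph{Subcritical authority with broad search.} Since $\RA=ms$, the condition $\RA<1$ is just $ms<1$. This involves $m$ only through the product with $s$. For example, $m=5$ and $s=1/10$ give $\RA=1/2$. So a mean candidate count above one is compatible with a subcritical authority tree, and nothing more is needed.

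\emph{Expected total progeny.} Let $Z_k$ be the number of authority nodes in generation $k$ of the authority tree, with $Z_0=1$. Each authority node has $K$ authority-bearing children, independent of the past, with $\E[K]=\RA$. Conditioning on generation $k$ and applying the tower property gives $\E[Z_{k+1}]=\RA\,\E[Z_k]$, so $\E[Z_k]=\RA^k$. The total number of authority nodes is $T=\sum_{k\ge0}Z_k$. Monotone convergence (Tonelli) then gives $\E[T]=\sum_k\RA^k=1/(1-\RA)$ when $\RA<1$. The only care needed is to justify exchanging expectation with the infinite sum, and nonnegativity handles that.

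\emph{Union bound and its equivalence.} In the untruncated benchmark every authority node is an activation charged $r$, and $r$ is a valid conditional certificate in the sense of Eq.~\eqref{eq:conditional}. Indeed, each node's local harm probability is $p\le r$, independently of the history. Ordering activations breadth-first and applying Boole's inequality with the tower property, exactly as in the proof of the Selection-valid tree guarantee, gives
\[
\Prb\Bigl(\bigcup_i H_i\Bigr)\le \E\Bigl[\sum_i a_i\Bigr]=r\,\E[T]=\frac{r}{1-\RA}.
\]
Here the sum runs over the random, possibly infinite, set of activations, and monotone convergence is applied as $n\to\infty$. One subtlety is that the earlier theorem assumes $\sum_i a_i\le\delta$ almost surely, which fails here because $T$ is unbounded. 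I will therefore not invoke the theorem itself. I will reuse only its first inequality chain, which requires just $\E[\sum_i a_i]$, to get the ex ante bound.

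Finally, since $r>0$, $\delta>0$ and $1-\RA>0$, the inequality $r/(1-\RA)\le\delta$ is equivalent to $1-\RA\ge r/\delta$, that is, $\RA\le 1-r/\delta$. The hypothesis $r\le\delta$ ensures that this threshold is nonnegative, so the condition is nonvacuous.

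I expect the main obstacle to be this last interpretive point. The almost-sure budget condition of the earlier theorem does not hold in the untruncated tree, so the argument must go through the expectation form of the bound rather than the theorem as stated.
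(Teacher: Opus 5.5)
Your proposal is correct and follows the route the paper implicitly takes, since the paper states this corollary without a separate proof: $\RA=ms$, the subcritical total-progeny identity $\E[T]=\sum_k\RA^k=1/(1-\RA)$, and the expectation form of the conditional union bound $\Prb(\bigcup_i H_i)\le\E[\sum_i a_i]=r\,\E[T]$ (the same form the paper uses in its LP-certificate corollary), followed by rearrangement. You are also right that the almost-sure escrow hypothesis of the tree theorem fails in the untruncated tree, so only its inequality chain should be reused; this matches the paper's remark that the ex ante calculation does not replace pathwise escrow.
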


This calculation describes expected exposure before an episode begins.  A realized tree may be
much larger than its expectation, so the calculation does not replace pathwise escrow.  In the
unbounded homogeneous model, trajectory harm can vanish with local harm only when authority
reproduction is not supercritical.  Pathwise escrow can enforce the $\delta$ bound in any branching
regime.  Finite depth removes the infinite-tree survival floor, but does not by itself certify a
prescribed risk level.  Dependence may also create a floor that persists as local i.i.d. risk falls.
For example, suppose a shared defect occurs with probability $\gamma$ and causes catastrophe with
probability one.  Conditional on no defect, assume the i.i.d. branching model holds.  The resulting
risk is $\gamma+(1-\gamma)R(p)\geq\gamma$.

\subsection{Risk and compute shadow prices}

The single-type model treats every authority node alike.  Real systems may have planning and
deployment roles with different costs and continuation paths.  To represent these differences, let
$i\in\{1,\ldots,d\}$ index authority-node type and $a\in\mathcal A_i$ index a mode available to
that type.  A mode has useful contribution $w_{ia}$, compute cost $c_{ia}$, and activation charge
$r_{ia}$.  It also produces an expected number $M_{ia,j}$ of type-$j$ authority children.  The
nonnegative vector $\mu$ records the root population.

We make a stability assumption: for some $\eta>0$, every admissible stationary policy has an
offspring matrix with spectral radius at most $1-\eta$.  This keeps the expected authority
population finite.  Let $y_{ia}$ record the expected number of type-$i$ authority nodes that use
mode $a$.  Within this finite-type stationary model, planning becomes the following
occupancy-measure linear program:

\begin{equation}
\begin{aligned}
 \max_{y\geq0}\quad &\sum_{i,a}y_{ia}w_{ia}\\
 \text{s.t.}\quad
 &\sum_a y_{ja}=\mu_j+\sum_{i,a}y_{ia}M_{ia,j} &&\forall j,\\
 &\sum_{i,a}y_{ia}r_{ia}\leq\delta,\qquad
 \sum_{i,a}y_{ia}c_{ia}\leq\bar C .
\end{aligned}
\label{eq:lp}
\end{equation}

The first constraint balances the expected flow of nodes into and out of each type.  The other
two keep expected cumulative risk charges and compute within their respective budgets.  The dual
representation turns these shared budgets into prices that can guide local mode choices.

\begin{theorem}[Decentralized shadow prices]
Assume Eq.~\eqref{eq:lp} is feasible, the type and mode sets are finite, costs and charges are
nonnegative, and uniform subcriticality holds.  Then the LP is equivalent to optimization over
stationary randomized policies, and strong duality holds.  There exist risk and compute prices
$\lambda,\nu\geq0$ and continuation values $V_i$ such that every mode satisfies
\[
 V_i\geq w_{ia}-\lambda r_{ia}-\nu c_{ia}+\sum_jM_{ia,j}V_j ,
\]
and equality holds for every mode with $y_{ia}>0$.  For modes indexed by integer fanout $k$,
define $G_{ik}=w_{ik}-\nu c_{ik}+\sum_jM_{ik,j}V_j$.  Suppose the available fanout levels are
consecutive and nested, $r_{ik}-r_{i,k-1}>0$, and the ratios
$(G_{ik}-G_{i,k-1})/(r_{ik}-r_{i,k-1})$ decrease with $k$.  A maximizing fanout can then be chosen
as the largest $k$ whose marginal ratio is at least $\lambda$.  If no positive increment clears
the threshold, choose the smallest available fanout.  Once the dual values are fixed, fanout
follows a threshold rule analogous to a base-stock policy.
\end{theorem}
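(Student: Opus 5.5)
The plan has three steps: the LP--policy equivalence, strong duality with complementary slackness, and the threshold rule from a discrete concavity argument.

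\emph{Equivalence.} Given a stationary randomized policy $\pi(a\mid i)$, let $M_\pi$ be its type-to-type offspring matrix. Uniform subcriticality gives $\rho(M_\pi)\leq1-\eta$, so the Neumann series $\sum_n\mu^\top M_\pi^n$ converges. The expected type-occupation vector $x$ then solves $x^\top=\mu^\top+x^\top M_\pi$. Setting $y_{ia}=x_i\pi(a\mid i)$ gives an LP-feasible point whose objective, risk charge and compute equal the policy's expected values. Conversely, from any feasible $y$ define $x_i=\sum_a y_{ia}$ and $\pi(a\mid i)=y_{ia}/x_i$ when $x_i>0$, choosing any mode otherwise. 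The flow constraint says $x^\top=\mu^\top+x^\top M_\pi$. Since $I-M_\pi$ is invertible, with $(I-M_\pi)^{-1}=\sum_n M_\pi^n$, the vector $x$ is the unique solution, so it coincides with the occupation measure of $\pi$. This two-way map shows the two problems share a value.

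\emph{Duality and complementary slackness.} The feasible region is a polyhedron, and the objective is bounded. To see boundedness, note that every feasible $y$ is the occupation measure of some stationary policy. Hence $\sum_a y_{ia}\leq\|\mu\|_1/\eta$ up to a norm constant, a uniform bound from $\|(I-M_\pi)^{-1}\|$ under the spectral margin. I expect this to be the main obstacle: a spectral radius bound does not directly bound a norm uniformly over policies. I would first try compactness. The set of stationary policies is compact, $\rho<1$ holds on all of it, and $\pi\mapsto(I-M_\pi)^{-1}$ is continuous there, so the inverse is uniformly bounded.

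LP strong duality then holds since the primal is feasible and bounded. Attach free multipliers $V_j$ to the flow equalities and $\lambda,\nu\geq0$ to the budgets. The dual constraints read $V_i\geq w_{ia}-\lambda r_{ia}-\nu c_{ia}+\sum_jM_{ia,j}V_j$ for every $(i,a)$, and the dual objective is $\mu^\top V+\lambda\delta+\nu\bar C$. Complementary slackness between an optimal primal--dual pair gives equality whenever $y_{ia}>0$.

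\emph{Threshold rule.} Fix type $i$ and the duals. Choosing a fanout mode means maximizing $G_{ik}-\lambda r_{ik}$ over the consecutive levels $k$, since the dual inequality shows $V_i$ equals this maximum at any supported mode. Write $\Delta_k=(G_{ik}-G_{i,k-1})/(r_{ik}-r_{i,k-1})$ with positive denominators. Then
\[
(G_{ik}-\lambda r_{ik})-(G_{i,k-1}-\lambda r_{i,k-1})=(r_{ik}-r_{i,k-1})(\Delta_k-\lambda).
\]
Its sign is that of $\Delta_k-\lambda$. Because $\Delta_k$ is decreasing, these increments are nonnegative up to the last $k$ with $\Delta_k\geq\lambda$ and negative afterwards. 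The objective is therefore unimodal and maximized at that $k$, or at the smallest level if no increment clears $\lambda$. Ties at $\Delta_k=\lambda$ are harmless because the statement only asserts that a maximizer \emph{can} be chosen this way. This gives the base-stock form.
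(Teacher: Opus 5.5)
Your proposal is correct and follows the paper's proof essentially step for step. It uses policy--occupancy equivalence via $x=\mu+xM_\pi$ and the invertibility of $I-M_\pi$, then LP duality with complementary slackness, and finally the single sign change of $(r_{ik}-r_{i,k-1})(\Delta_k-\lambda)$. Your compactness argument for a uniform bound on $(I-M_\pi)^{-1}$ also supplies the primal boundedness that the paper's appeal to ``standard finite-dimensional LP duality'' leaves implicit.
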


The two prices have a practical reading.  The compute component $c_{ia}$, which may include
sandbox search initiated by the mode, is priced by $\nu$.  Its activation charge $r_{ia}$ is also
priced by $\lambda$.  At values of
$\delta$ where the optimum is differentiable, a smaller risk budget weakly increases the risk
price.  With decreasing marginal ratios, the policy then removes the least valuable authority
increments first.

\begin{corollary}[LP certificate]
If $r_{ia}$ is a valid harm bound conditional on the full pre-activation history whenever mode
$(i,a)$ is activated, the stationary policy induced by any feasible occupancy $y$ satisfies
\[
 \Prb(\text{any harm})\leq \E\!\sum_{\text{activated }v}r_v
 =\sum_{i,a}y_{ia}r_{ia}\leq\delta.
\]
\end{corollary}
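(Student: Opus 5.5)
The corollary has three links: a union bound, an occupancy identity, and feasibility of $y$. The plan is to reduce the first link to the argument of the selection-valid tree guarantee and to derive the middle equality from the flow constraints of Eq.~\eqref{eq:lp} under uniform subcriticality.

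\textbf{Policy and history.} Given a feasible $y$, define the induced stationary randomized policy in the usual way. A type-$i$ authority node with $\sum_a y_{ia}>0$ picks mode $a$ with probability $y_{ia}/\sum_{a'}y_{ia'}$, drawn independently of everything else. Types with zero occupancy are never reached in expectation, and any mode may be assigned to them. Enumerate activated authority nodes in admission order $v_1,v_2,\ldots$, and let $a_k=r_{v_k}$ be the charge of the mode used. Take $\F_{k-1}$ to be the pre-activation history, which includes the randomization used to pick the mode. Then $a_k$ is $\F_{k-1}$-measurable, and the hypothesis gives $\Prb(H_k\mid\F_{k-1})\le a_k$.

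\textbf{Union bound.} Boole's inequality and the tower property give, for each $n$,
\[
\Prb\Big(\bigcup_{k\le n}H_k\Big)\le\E\sum_{k\le n}a_k .
\]
Monotone convergence then extends this to the whole sequence:
\[
\Prb(\text{any harm})\le\E\sum_{\text{activated } v}r_v .
\]
This is the proof of the selection-valid tree guarantee with the almost-sure budget replaced by an expectation. No pathwise constraint $\sum a_k\le\delta$ is needed, because that theorem's proof only ever uses the expectation of the sum.

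\textbf{Occupancy identity.} Let $\bar y_{ia}$ be the expected number of type-$i$ nodes using mode $a$ under the induced policy. Write $n_i=\sum_a\bar y_{ia}$ and $P_{i,j}=\sum_a (y_{ia}/\sum_{a'}y_{ia'})M_{ia,j}$. Because modes are chosen independently of a node's ancestry, Wald-type conditioning gives $\E[\#\text{type-}j\text{ children}]=\sum_i n_iP_{i,j}$. The expected population vector therefore satisfies $n=\mu+P^\top n$.

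By uniform subcriticality, $\rho(P)\le1-\eta<1$. Hence $n=\sum_{t\ge0}(P^\top)^t\mu=(I-P^\top)^{-1}\mu$, which is finite and is the unique solution of this equation.

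Now set $x_i=\sum_a y_{ia}$. The flow constraint of Eq.~\eqref{eq:lp} says exactly $x=\mu+P^\top x$, so uniqueness gives $x=n$. It follows that
\[
\bar y_{ia}=n_i\,\frac{y_{ia}}{x_i}=y_{ia}.
\]
Tonelli then yields
\[
\E\sum_v r_v=\sum_{i,a}\bar y_{ia}r_{ia}=\sum_{i,a}y_{ia}r_{ia},
\]
and the risk constraint of Eq.~\eqref{eq:lp} gives $\le\delta$.

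\textbf{Main obstacle.} The delicate step is this identification $\bar y=y$. Two things must hold. First, the expected population must be finite and given by the Neumann series. Here the assumption that every admissible stationary policy has spectral radius at most $1-\eta$ is essential; it presupposes that the induced randomized policy is admissible. Second, the flow equations must have a unique nonnegative solution. Without subcriticality, a feasible $y$ could fail to match the realized occupancy, and the union bound would not connect to the LP.
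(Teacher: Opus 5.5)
Your proposal is correct and follows essentially the same route as the paper. The paper also combines three steps: the conditional union bound from the selection-valid tree guarantee, which only uses $\E\sum a_i$; policy--occupancy equivalence, in which $\pi_i(a)=y_{ia}/x_i$ reproduces $y$ because uniform subcriticality makes $(I-P^\top)^{-1}\mu$ the unique finite solution of the flow equations; and the LP risk constraint. You spell out the uniqueness step that the paper compresses into ``the flow equations recover the same occupancy.'' One point to tighten: $n=\sum_{t\ge0}(P^\top)^t\mu$ should come from summing expected generation sizes, because the fixed-point equation $n=\mu+P^\top n$ alone does not rule out $n=\infty$.
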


The first inequality uses the same conditional union-bound argument, and the equality follows
from expected occupancy.  The LP caps expected cumulative charge across episodes, which gives
the unconditional episode-level probability bound above.  The governor adds the stronger
requirement that the escrow limit hold on each realized trajectory.

\section{Progressive Risk Vesting Governor}

The governor sits between sandboxed branches and actions or capabilities that cross the
application-defined activation boundary.
Algorithm~\ref{alg:prv} first applies hard checks on permissions and destinations, then evaluates
the remaining requests probabilistically.  A branch cannot certify its own request or alter the
escrow service.  Before any debit, the governor immutably binds the requester identity and lineage
to the requested action, arguments, and capability scope.

\begin{algorithm}[t]
\caption{Progressive risk vesting at the capability-activation boundary}
\label{alg:prv}
\begin{algorithmic}[1]
\Require root allowance $\delta$; independent compute budget
\State launch root and descendants with sandbox-only capabilities
\For{each activation request $i$ in atomic admission order}
  \State immutably bind request details to its requester and lineage
  \If{a deterministic policy rule rejects the request}
    \State deny the request or ask a human to narrow it
  \Else
    \State compute selection-conditional certificate $a_i$
    \If{$a_i$ exceeds uncommitted lineage escrow}
      \State deny the request or request a narrower revision
    \Else
      \State atomically debit $a_i$ and authorize one execution
      \State keep debit after exposure; gate each retry again
    \EndIf
  \EndIf
\EndFor
\end{algorithmic}
\end{algorithm}

Human approval produces a new request that re-enters the gate; it does not skip certification or
debit.  Each successful pass authorizes one action with fixed arguments and scope.  A longer
capability lease needs a certificate that covers all uses and downstream effects during the lease.

Local analysis and simulation can remain sandboxed when they do not cross the application-defined
activation boundary.  In coding, edits and tests can stay inside an isolated copy; secret access
and deployment require escrow.  In banking, drafting a transfer can remain sandboxed; execution
requires escrow.  The boundary varies by application, so the surrounding system needs to enforce
and log it.

\section{Numerical Results}

We use two reproducible synthetic studies to illustrate the analytical predictions across
parameter settings.  These studies do not estimate safety in deployed agents.  The accompanying
code and data record the fixed seed and parameter settings.

\subsection{Phase transition}

When candidate counts are Poisson, thinning gives Poisson authority offspring with mean
$\RA=ms$.  We solved Eq.~\eqref{eq:fixedpoint} to numerical tolerance $10^{-14}$ over
$m\in[0.4,3]$ and $s\in[0.05,1]$.  Figure~\ref{fig:phase}a fixes $p=0.005$.  The transition occurs
at $ms=1$.  A system that generates three sandbox candidates per authority node remains
subcritical if fewer than one third are promoted on average.

Figure~\ref{fig:phase}b shows how trajectory harm varies with local harm.  At $p=0.005$, the
trajectory-harm probability is $1.23\%$ for $\RA=0.6$, $9.68\%$ at criticality, and $51.86\%$
for $\RA=1.4$.  In the supercritical case, the nonextinction floor is $51.10\%$: reducing local
harm from $0.005$ to $10^{-5}$ leaves trajectory harm at $51.10\%$.  For comparison, the
subcritical value at $10^{-5}$ is $2.50\times10^{-5}$, matching the first-order multiplier
$1/(1-0.6)=2.5$.

\begin{figure*}[t]
  \centering
  \includegraphics[width=\textwidth]{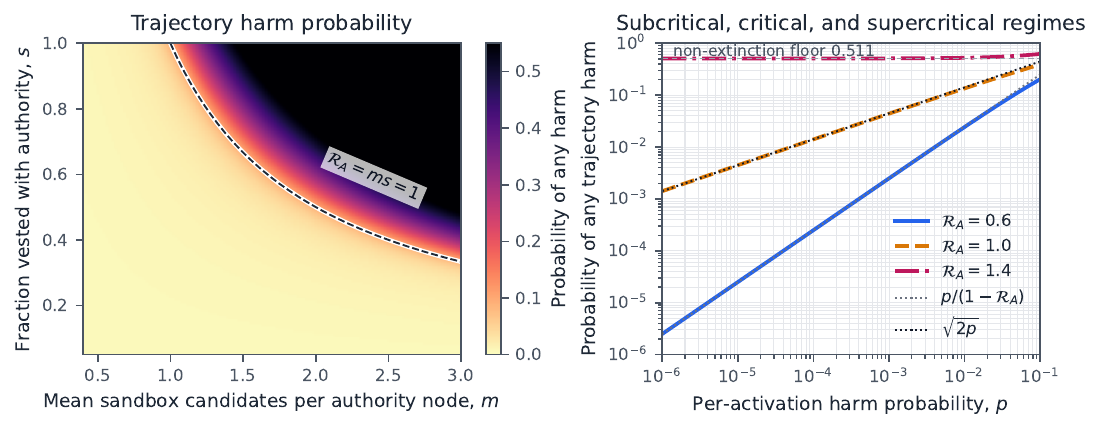}
  \Description{Two-panel numerical study. The left panel is a heatmap of trajectory-harm
  probability as a function of mean sandbox fanout and authority-activation probability, with the
  critical boundary marked. The right panel is a log-log plot of harm probability in the
  subcritical, critical, and supercritical regimes.}
  \caption{Numerical calculations for Poisson branching.  (a) Probability of any trajectory harm at
  per-activation harm $p=0.005$; the dashed curve marks the authority critical boundary $ms=1$.
  (b) Computed harm probability as a function of $p$ in three reproduction regimes.  The dotted
  lines show the subcritical and critical asymptotics.}
  \label{fig:phase}
\end{figure*}

\subsection{Option value of delayed vesting}

The second study examines how delayed vesting changes candidate selection.  We simulate 200,000
episodes (seed 20260901) and divide them equally between tuning and held-out evaluation.  Each
episode contains $n$ sandbox candidates.  Candidate quality follows
$Q\sim\mathrm{Beta}(2,2)$, and the observed score is $Q+\epsilon$, where
$\epsilon\sim\mathcal N(0,0.15^2)$, independently across candidates.  The parent activates the
candidate with the highest score.  Net utility is the selected candidate's true quality minus
$0.004n$.  The trajectory budget is $0.05$, and the valid charge for the single activation is
$0.01$.

The tuning set selects $n=5$ under spawn charging and $n=14$ under progressive vesting.  On the
held-out episodes, mean net utility is $0.6971$ under spawn charging (standard error $0.00048$)
and $0.7372$ under progressive vesting (standard error $0.00037$).  The paired difference is
$0.0401$, or $5.75\%$.  Its 95\% confidence interval is $[0.0391,0.0411]$.
Figure~\ref{fig:option} shows the full held-out curve.  The experiment does not imply that fourteen
candidates are optimal in an LLM deployment.  Here, delayed charging leaves more candidate counts
feasible when pruning is informative and sandbox compute is charged separately.

\begin{figure}[t]
  \centering
  \includegraphics[width=\columnwidth]{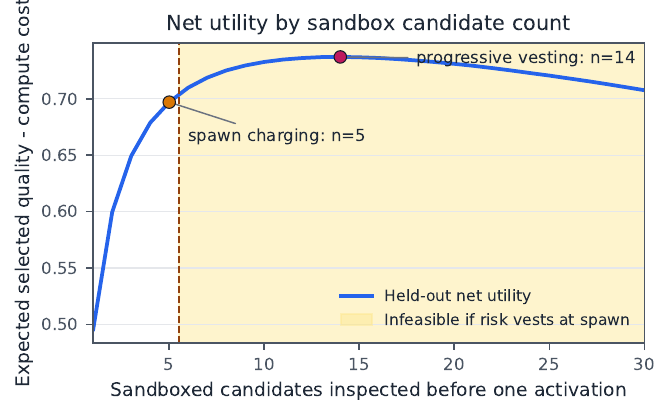}
  \Description{Line chart of held-out expected net utility as a function of sandbox candidate
  count. Spawn charging becomes infeasible above five candidates, and progressive vesting
  reaches its maximum at fourteen candidates.}
  \caption{Held-out candidate-selection results.  Spawn charging makes $n>5$ infeasible because
  it reserves $0.01$ for every candidate.  Progressive vesting debits $0.01$ for the selected
  candidate, so adding sandbox candidates does not increase the activation charge.}
  \label{fig:option}
\end{figure}

\section{Scope, Limitations, and Evaluation Roadmap}

\paragraph{Conditional certification.}
The anytime theorem takes valid certificates $a_i$ as inputs; it does not show how to learn them.
A learned gate would need calibration and auditing under the routing and escalation policy used
in deployment.  Calibration within fixed strata or anytime-valid upper confidence sequences may
provide useful starting points.  Distribution shift or a strategic attack may break
either approach.  The guarantee does not apply if the gate is compromised.

\paragraph{Sandbox enforcement.}
The model assigns no catastrophe allowance to sandboxed reasoning under the assumption that
external controls block routes to the specified harm.  Access to sensitive data or the ability
to modify shared state may cross this boundary.  Resource exhaustion is a separate concern.  Even
when it avoids the specified harm, a supercritical sandbox process can consume unbounded compute
on nonextinction.  Separate compute and concurrency limits remain needed.

\paragraph{Branching model.}
Real agent trees may have neither i.i.d. offspring nor independent local failures.  Shared models
or tools can cause several branches to fail together.  The phase-transition model is best read as a
diagnostic rather than a literal description of deployed agents.  The conditional escrow result
allows dependent branches when its certificate assumption holds.  The numerical results are
synthetic and do not establish safety for WebSwarm or any deployed system.

\paragraph{Evaluation.}
A deployment study could use disposable coding environments and AgentDojo banking tasks
\cite{debenedetti2024agentdojo}.  It should compare \prv{} with unrestricted spawning and spawn
charging under matched compute limits.  The environments should record each gate decision and
capability exposure.  Primary outcomes should be catastrophic-action rate and useful task completion,
with confidence intervals for the safety estimates.  Tree size and post-selection calibration
would provide secondary diagnostics.  Held-out tests could vary the attacks or model families.

\paragraph{Ethics.}
This approach could be used to scale autonomous-agent fleets without enough oversight.
Experiments should keep irreversible actions inside emulators and avoid live credentials.  They
should report both useful autonomy and residual risk.  The human operator remains
responsible for the catastrophe definition and permission policy, as well as the deployment
decision.

\section{Conclusion}

In recursive agents, creating a branch and granting it irreversible authority are separate
decisions.  Flat orchestration can make this distinction less visible.  Progressive risk vesting
allows broad sandbox search; the shared risk allowance is debited at activation rather than spawn.  Under the stated
conditional-certificate assumption, the escrow bounds the probability of a catastrophic
activation in an adaptively generated tree with dependent branches.  In the branching benchmark,
the authority reproduction rate marks a qualitative phase transition.  Below one, trajectory
harm is linear in local harm.  At one, it scales with the square root of local harm.  Above one, a
nonzero floor remains.  For a uniformly subcritical, finite-type stationary model, shadow prices
guide which branches receive compute and authority.  This suggests spawning candidates to gather
information, then vesting authority only when the risk-adjusted continuation value justifies a
debit from the remaining escrow.

\appendix
\section{Full Proofs}

\subsection{Selection-valid tree guarantee}

For finite $n$, Boole's inequality and the tower property give:
\[
 \Prb\!\left(\bigcup_{i=1}^{n}H_i\right)
 \leq\sum_{i=1}^{n}\Prb(H_i)
 =\sum_{i=1}^{n}\E\!\left[\Prb(H_i\mid\F_{i-1})\right]
 \leq\E\!\left[\sum_{i=1}^{n}a_i\right]
 \leq\delta.
\]
Continuity from below extends the result to $n\to\infty$.  Arbitrary branch dependence is allowed
provided the conditional certificates remain valid.

To check the ledger invariant, fix a finite set of activations and take its finite ancestor
closure.  Treat balances transferred beyond this subtree as nonnegative boundary balances.
Repeated use of the local conservation inequality bounds the selected debits plus those boundary
balances by $\delta$.  Dropping the boundary balances and exhausting the countable activation
sequence by finite sets proves the claim.

\subsection{Option-value dominance}

Let $S(\pi)$ and $A(\pi)$ denote the spawned and activated node sets under policy $\pi$, with
$A(\pi)\subseteq S(\pi)$ on every path.  Give branch $v$ the same nonnegative charge $b_v$ under
both rules.  Spawn charging irrevocably reserves $b_v$ when branch $v$ is created and does not
reclaim that reservation after the branch is discarded, so
$\sum_{v\in S(\pi)}b_v\leq\delta$.  Applying the same sandbox and activation decisions under
progressive vesting instead spends $\sum_{v\in A(\pi)}b_v$, which is no larger.  Every policy
feasible under spawn charging is also feasible under progressive vesting, proving weak
dominance.  Strict improvement requires further conditions; the candidate-selection experiment
provides one concrete case.

\subsection{Branching phase transition}

Let $h_L$ denote the probability of no harm through depth $L$, with $h_{-1}=1$.  Conditioning on
the root and its offspring gives $h_L=(1-p)\psi(h_{L-1})$.  This sequence is decreasing and bounded
below, so it converges to a fixed point $h$.  Because the recursion is monotone and begins at one,
the limit is the largest fixed point.

Write $x=1-h$.  As $p\downarrow0$,
\[
1-x=(1-p)\left(1-\RA x+\frac{\beta}{2}x^2+o(x^2)\right).
\]
If $\RA<1$, the implicit-function theorem applies at $(p,x)=(0,0)$ and gives
$(1-\RA)x=p+O(p^2)$, so $x=p/(1-\RA)+O(p^2)$.  If $\RA=1$, the linear term cancels and
$p=(\beta/2)x^2+o(p+x^2)$, so $x\sim\sqrt{2p/\beta}$.

If $\RA>1$, let $\xi<1$ denote the extinction probability.  On extinction, the tree is finite
almost surely, so bounded convergence gives a conditional no-harm probability approaching one as
$p\downarrow0$.  On nonextinction, infinitely many independent local harm events imply harm
almost surely for every $p>0$.  It follows that $h(p)\to\xi$ and $R(p)\to1-\xi$.

\subsection{Occupancy linear program and duality}

For a stationary policy $\pi$, define
$M_\pi(i,j)=\sum_a\pi_i(a)M_{ia,j}$.  Uniform subcriticality ensures that
$(I-M_\pi)^{-1}$ is finite.  If $x_i$ denotes expected type-$i$ occupancy, then
$x=\mu+xM_\pi$, and $y_{ia}=x_i\pi_i(a)$ satisfies the flow equations.  To recover a policy from
feasible nonnegative $y$, set $x_i=\sum_a y_{ia}$ and
$\pi_i(a)=y_{ia}/x_i$ whenever $x_i>0$.  The policy at unreachable types may be chosen
arbitrarily.  The flow equations recover the same occupancy, establishing policy--occupancy
equivalence.

Introduce a free dual variable $V_j$ for each flow equality and nonnegative multipliers $\lambda$
and $\nu$ for the risk and compute constraints.  The resulting dual is
\[
 \min_{V,\lambda,\nu}\;\mu\!\cdot\!V+\lambda\delta+\nu\bar C
\]
subject to
\[
 V_i\geq w_{ia}-\lambda r_{ia}-\nu c_{ia}+\sum_jM_{ia,j}V_j
 \quad\text{for all }(i,a).
\]
Standard finite-dimensional LP duality equates the primal and dual optima.  Complementary
slackness makes the constraint tight for every mode with $y_{ia}>0$.  For a type $i$ with nested,
consecutive fanout modes, let $D_{ik}=G_{ik}-\lambda r_{ik}$.  Then
\[
D_{ik}-D_{i,k-1}
=(r_{ik}-r_{i,k-1})
\left[
\frac{G_{ik}-G_{i,k-1}}{r_{ik}-r_{i,k-1}}-\lambda
\right].
\]
Positive risk increments and decreasing ratios make these adjacent differences change sign at
most once.  A maximizing fanout therefore follows the stated threshold rule, with ties resolved
toward the largest maximizing level.

If each $r_{ia}$ is a valid harm bound conditional on the full pre-activation history whenever
mode $(i,a)$ is activated, the same tower-property and union-bound argument as in the first proof yields
\[
 \Prb(\text{any harm})\leq\E\!\sum_{\text{activated }v}r_v.
\]
By the definition of occupancy, the right-hand side is
$\sum_{i,a}y_{ia}r_{ia}\leq\delta$ for any feasible solution of the linear program.

\section{Numerical Protocol}

The script \texttt{simulate.py} uses Python, NumPy, and Matplotlib and fixes the random seed at
20260901.  It reproduces the plots from the accompanying CSV files.

\paragraph{Fixed point.}
For Poisson offspring, the probability-generating function is
\[
 \psi(z)=\exp(\RA(z-1)).
\]
We initialize fixed-point iteration at $h_0=1$ and stop when consecutive iterates differ by at
most $10^{-14}$.  The grid contains 131 values of $m$ from 0.4 to 3.0 and 120 values of $s$ from
0.05 to 1.0.  The heatmap uses local harm 0.005.  The asymptotic panel evaluates 121
logarithmically spaced values of $p$ from $10^{-6}$ to $10^{-1}$.

\paragraph{Candidate selection.}
We generate 200,000 episodes with 30 candidates each.  For every $n=1,\ldots,30$, we use the
first $n$ candidates from the same episode, which reduces Monte Carlo noise across values of
$n$.  We use the first 100,000 episodes to select the best feasible $n$ under each accounting
rule and the remaining 100,000 to evaluate those choices.  Each episode contains 30 independent
$\mathrm{Beta}(2,2)$ qualities, each paired with an independent Gaussian score error with standard
deviation 0.15.  The
parent selects the candidate with the highest noisy score among the first $n$ candidates.  Net
utility is the selected latent quality minus $0.004n$.  We compute the standard errors and paired
confidence interval from the held-out episodes.  With root allowance 0.05 and charge 0.01, spawn charging permits
$n\leq5$.  Progressive vesting activates the selected candidate and spends 0.01 for every $n$ in
the sweep.

\section*{AI-Assistance Disclosure}

The author used OpenAI Codex to assist with the research framing and formalization, proof
development, the synthetic numerical study, drafting, and editing.  The author reviewed the
manuscript and remains responsible for its proofs, numerical results, citations, and claims.

\balance
\bibliographystyle{ACM-Reference-Format}
\bibliography{references}

\end{document}